\DeclareMathOperator*{\argmin}{argmin}
\newtheorem{theorem}{Theorem}
\newacronym{lnt}{LNT}{Local Neural Transformations}
\newacronym{ad}{AD}{Anomaly detection}
\newacronym{ddcl}{DDCL}{dynamic deterministic contrastive loss}
\newacronym{swat}{SWaT}{Secure Water Treatment Dataset}
\newacronym{wadi}{WaDi}{Water Distribution Dataset}
\newacronym{neutral}{NeuTraL}{Neural Transformation Learning}
\newacronym{cpc}{CPC}{Contrastive Predictive Coding}
\newacronym{dcl}{DCL}{Deterministic Contrastive Loss}
\newacronym{ocsvm}{OC-SVM}{OC-SVM}
\newacronym{lof}{LOF}{Local Outlier Factor}
\newacronym{gdn}{GDN}{Graph Deviation Network}
\newacronym{lstm}{LSTM}{Long Short Term Memory}
\icmltitlerunning{Detecting Anomalies within Time Series using Local Neural Transformations}
\begin{document}

\twocolumn[
\icmltitle{Detecting Anomalies within Time Series using Local Neural Transformations}



\icmlsetsymbol{equal}{*}

\begin{icmlauthorlist}
\icmlauthor{Tim Schneider}{equal,ust}
\icmlauthor{Chen Qiu}{equal,bcai,tuk}
\icmlauthor{Marius Kloft}{tuk}
\icmlauthor{Decky Aspandi Latif}{ust}
\icmlauthor{Steffen Staab}{ust,uso}
\icmlauthor{Stephan Mandt}{uci}
\icmlauthor{Maja Rudolph}{bcai}
\end{icmlauthorlist}

\icmlaffiliation{bcai}{Bosch Center for Artificial Intelligence}
\icmlaffiliation{tuk}{TU Kaiserslautern, Germany}
\icmlaffiliation{ust}{University of Stuttgart, Germany}
\icmlaffiliation{uci}{UC Irvine, USA}
\icmlaffiliation{uso}{University of Southampton, UK}

\icmlcorrespondingauthor{Tim Schneider}{timphillip.schneider@ipvs.uni-stuttgart.de}
\icmlcorrespondingauthor{Maja Rudolph}{Maja.Rudolph@us.bosch.com}

\icmlkeywords{Machine Learning, ICML}

\vskip 0.3in
]



\printAffiliationsAndNotice{\icmlEqualContribution} 

\begin{abstract}
We develop a new method to detect anomalies within time series, which is essential in many application domains, reaching from self-driving cars, finance, and marketing to medical diagnosis and epidemiology. The method is based on self-supervised deep learning that has played a key role in facilitating deep anomaly detection on images, where powerful image transformations are available. However, such transformations are widely unavailable for time series. Addressing this, we develop \gls{lnt}, a method learning local transformations of time series from data. The method produces an anomaly score for each time step and thus can be used to detect anomalies within time series. We prove in a theoretical analysis that our novel training objective is more suitable for transformation learning than previous deep \gls{ad} methods. Our experiments demonstrate that \gls{lnt} can find anomalies in speech segments from the LibriSpeech data set and better detect interruptions to cyber-physical systems than previous work. Visualization of the learned transformations gives insight into the type of transformations that \gls{lnt} learns. 
\end{abstract}
\glsresetall
\section{Introduction}
\Gls{ad} in time series is significant in many industrial, medical, and scientific applications. For instance, undetected anomalies in water treatment facilities or chemical plants can bring harm to millions of people. Such systems need to be constantly monitored for anomalies.

While AD has been an important field in machine learning for several decades~\citep{ruff2020unifying}, promising performance gains have been primarily reported in applying deep learning methods to high-dimensional data such as images \citep{golan2018deep,wang2019effective,hendrycks2019using,bergman2020classification}. With few exceptions~\citep{gidaris2018unsupervised,hundman2018detecting,shen2020timeseries}, there is much less work on other domains such as time series. 
This may be attributed to the fact that some time series exhibit complex temporal dependencies and can be even more diverse than natural images. As detailed below, this paper attempts to integrate recent ideas from self-supervised \gls{ad} of non-temporal data with modern deep learning architectures for sequence modeling.  

While \emph{unsupervised} methods based on density estimation can yield poor results for \gls{ad} \citep{nalisnick2018deep}, a recent trend relying on \emph{self-supervision} has proven superior performance. 
In this line of work, one uses auxiliary tasks, often based on data augmentation, both for training and anomaly scoring. 
Data augmentation usually relies on hand-designed data transformations such as rotations \citep{golan2018deep,wang2019effective,hendrycks2019using}. \citet{qiu2021neural} showed that these transformations could instead be \emph{learned}, thereby making self-supervised \gls{ad} applicable to specialized domains beyond images. 
While this approach can identify an entire sequence as anomalous, it can still not be applied to detecting anomalies \emph{within} time series (i.e., on a sub-sequence level).


But this adaption is not straightforward: For AD within time series, both local semantics (the dynamics within a time window) and contextualized semantics (how the time window relates to the remaining time series) matter. To capture both, 
we propose an end-to-end approach that combines 
time series representations~\citep{oord2018representation} with a novel transformation learning objective. As a result, the local transformations create different views of the data in the latent space \citep{rudolph2017structured} (as opposed to applying them to the data directly as in \citet{qiu2021neural}).  

We develop \gls{lnt}: a 
novel objective
that combines representation learning with transformation learning.
The encoder for feature extraction and the neural transformations are trained jointly on this loss. We show that the learned latent transformations can correspond to interpretable effects: in one experiment on speech data (details in \Cref{sec:experiments}), \gls{lnt} learns transformations that insert delays. Neural transformations are much more general than hand-crafted transformations, which for time series could be time warping, reflections, or shifts: as we illustrate, they can transform the data in ways unintuitive to humans but valuable for the downstream task of \gls{ad}. 

We prove theoretically (\Cref{sec:theory}) and show empirically (\Cref{sec:experiments}) that combining representation and transformation learning is beneficial for detecting anomalies within time series. \gls{lnt} outperforms various \gls{ad} techniques on benchmark data, including a baseline using the \gls{cpc} loss as the anomaly score \citep{de2021contrastive}. We evaluate the methods on public \gls{ad} datasets for time series from cyber-physical systems. Furthermore, we detect artificial anomalies in speech data, which is challenging due to its complex temporal dynamics. In all experiments, \gls{lnt} outperforms many strong baselines.

To summarize, our contributions in this work are:
\begin{enumerate}
    \item A {\em new method}, \gls{lnt}, for \gls{ad} within time series. It unifies time series representations with a novel approach for learning {\em local} transformations. A open-source pytorch implementation is available at \url{https://github.com/boschresearch/local_neural_transformations}.
    \item A {\em theoretical analysis}. We prove that both learning paradigms complement each other to avoid trivial solutions not appropriate for detecting anomalies. 
    \item An extensive {\em empirical study} showing that \gls{lnt} can detect anomalies within real cyber-physical data streams on par or better than many existing methods. 
\end{enumerate}
\section{Related Work}
We first describe related work in time series \gls{ad}, which is the problem we tackle in this work. We then describe related methods, specifically advances in self-supervised \gls{ad}.

\subsection{Time series anomaly detection}
There are two types of anomalies in time series: local and global anomalies. Global anomalies are entire time series, with a single anomaly score for the entire series. Local anomalies occur at isolated timestamps or short time intervals within the time series, so each time point must be assigned with an anomaly score. This is the setting that we consider in this work. 
Existing methods for local \gls{ad} in time series using deep learning can be divided into four categories, discussed in detail below:
\begin{enumerate*}[label=(\roman*)]
    \item methods based on sequence forecasting,
    \item autoencoders,
    \item generative sequence models, and
    \item other approaches.
\end{enumerate*}

\paragraph{Forecasting methods}
A straightforward approach to detect anomalies in time series is to use the error of a time-series forecaster (predicting the value of the next time step from the time series' past history) as an anomaly score. The rationale behind is that a forecaster trained on mostly normal data will err less on normal than on abnormal data. We may use any time-series regression method as the forecaster, and various methods have been studied, including neural architectures such as recurrent neural networks (RNNs) \citep{malhotra2015long, filonov2016multivariate} and temporal convolutional neural networks (TCNs) \citep{he2019temporal, munir2019deepant}, where the convolution operation is applied along the temporal dimension only. 

\paragraph{Autoencoders}
To detect anomalies within time series, AEs have been combined with various neural network architectures, including RNNs \citep{malhotra2016lstm} and TCNs \citep{thill2020time} or variants \citep{zhang2019deep}. \citet{audibert2020usad} propose an architecture based purely on dense layers using a combination of two AEs connected with the adversarial loss. Again, the rational of using such approaches for \gls{ad} is that after training on normal data, a high reconstruction error can be used to detect anomalies.

\paragraph{Deep generative models}
Variational autoencoders (VAEs) \citep{kingma2014auto} have frequently been combined with RNNs \citep{solch2016variational, park2018multimodal} to detect anomalies within time series. \citet{pereira2018unsupervised} combine an RNN with temporal self-attention. \citet{guo2018multidimensional} use gated recurrent units (GRUs) in combination with a gaussian mixture model. \citet{su2019robust} augment a GRU-based VAE with a normalizing flow and a linear Gaussian state-space model. Generative adversarial networks \citep{goodfellow2014generative} have been used for AD within time series, taking either the discriminator's error \citep{liang2021robust} or the generator's residuals \citep{zhou2019beatgan} as an anomaly score. \citet{li2019mad} use a weighted combination of both. These approaches have been combined with TCNs \citep{zhou2019beatgan} and RNNs \citep{niu2020lstm,geiger2020tadgan}.

\paragraph{Other methods}
Some of the above-described approaches have been used in combination. For instance, \citet{zhao2020multivariate} combine TCNs and LSTMs. \citet{shen2020timeseries} combine a dilated RNN with a deep multi-sphere hypersphere classifier on the cluster centers of a hierarchical clustering procedure, with regularizers encouraging orthogonal centers at each layer and prediction regularizers encouraging useful representations in intermediate layers. \citet{deng2021graph} construct a graph with nodes for each feature and edges representing relations between features; these are learned and combined with a graph-based attention mechanism. \citet{carmona2021neural} employ a TCN as an encoder to train a hypersphere classifier in the latent space, with the option of including known anomalies into training.



\subsection{Self-supervised anomaly detection}
Recently, there has been growing interest in tackling \gls{ad} with \textit{self-supervised learning}. 
The core idea of self-supervised learning is to devise training tasks, often based on data augmentation, that guide the model to learn useful representations of the data. In self-supervised \gls{ad}, performance on the auxiliary tasks can be used for anomaly scoring. This is justified by the principle of inlier priority  \citep{wang2019effective} which posits that a self-supervised approach will prioritize solving its training task for inliers. 
 End-to-end detection methods based on transformation prediction \citep{golan2018deep,hendrycks2019using} have been designed for image \gls{ad}. 
However, they require effective hand-crafted transformations while for data types beyond images, it is hard to design effective transformations by hand. Previous works proposed to utilize random affine transformations \citep{bergman2020classification} or data-driven neural transformations \citep{qiu2021neural} for \gls{ad}.
Neural transformations have been used to detect entire anomalous sequences. However, when the neural transformation learning approach of \citet{qiu2021neural} is applied to the task of local anomaly detection, it can lead to trivial transformations that are not suitable for \gls{ad}. Our work proves this and introduces a novel {\em local} transformation learning objective. 

Alternatively, \citet{de2021contrastive} propose to use the training criterion of \gls{cpc}, a self-supervised approach without data augmentation, for anomaly detection. \Gls{cpc} learns local time series representations via contrastive predictions of future representations \citep{oord2018representation}. However, the \gls{cpc} loss is not a good fit for scoring anomalies since it requires a random draw of negative samples, which leads to a biased estimation or high memory cost during test time \citep{de2021contrastive}.
Our work overcomes this.
\section{Method}
\label{method}


In this work, we propose \textit{Local Neural Transformations (LNT)}, a new framework for detecting anomalies within time series data. \gls{lnt} has two components: feature extraction and feature transformations. Given an input sequence, an encoder produces an embedding for each time step, encoding relevant information from the current time window. These features are then transformed by applying distinct neural networks to each embedding, producing different {\em latent views}. The views are trained to fulfill two requirements; the views should be diverse and semantically meaningful, i.e., they should reflect both local dynamics as well as how the observations fit into the larger context of the time series.
The requirements are encouraged via self-supervision.

Specifically, two aspects of \gls{lnt} are \textit{self-supervised} -- it combines two different contrastive losses. One of the contrastive losses, \gls{cpc}, guides the representation learning that guarantees the encoder of \gls{lnt} to produce good semantic time series representations that generalize well to unseen test data. The second contrastive loss, a novel \gls{ddcl}, contrasts different latent views of each time step to encourage the latent views to be diverse and semantically representative of the time series, both in a local and in a contextualized sense.

\gls{lnt} follows the general paradigm of self-supervised \gls{ad}. During training, the capability to contrast the data views produced by the transformations improves for the normal data, while it deteriorates for anomalies. 
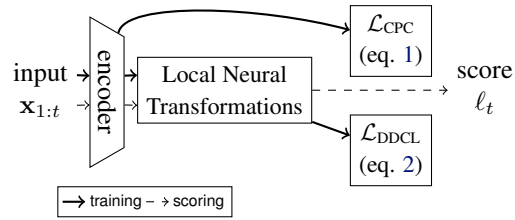
\begin{figure}[t]
    \centering
    \begin{tikzpicture}
    \node[align=center] (input) {input \\ $\mathbf{x}_{1:t}$};
    \node[draw, trapezium, inner sep=2pt, rotate=270, right= 4mm of input, xshift=-8.5mm, minimum width= 20mm] (enc) {encoder};
    
    \node[draw, align=center, right= 8mm of input] (lnt) {\small{Local Neural} \\ \small{Transformations}};
    
    \node[draw, align=center, right= 5mm of lnt, yshift=-8mm] (tr_loss) {\small{$\mathcal{L}_\text{DDCL}$} \\ \small{(eq. \ref{eq:ddcl_loss})}};
    
    \node[draw, align=center, above= 5mm of tr_loss] (emb_loss) {\small{$\mathcal{L}_\text{CPC}$} \\ \small{(eq. \ref{eq:cpc_loss})}};
    
    \node[right= 18mm of lnt, align=center] (score) {score \\ $\ell_t$};
    
    \draw[->, thick] ($(input.east)+(0.0,2mm)$) -- ($(enc.south)+(0.0,2.2mm)$);
    \draw[->, dashed] ($(input.east)+(0.0,-2mm)$) -- ($(enc.south)+(0.0,-1.7mm)$);
    
    \draw[->, thick] ($(enc.north)+(0.0,2.2mm)$) -- ($(lnt.west)+(0.0,2mm)$);
    \draw[->, dashed] ($(enc.north)+(0.0,-1.7mm)$) -- ($(lnt.west)+(0.0,-2mm)$);
    
    \draw[->, thick] (enc) to [in=165, out=90] (emb_loss);
    \draw[->, thick] (lnt) -- (tr_loss);
    \draw[->, dashed] (lnt) -- (score);
    
    \path (current bounding box.south west)
     node[matrix,anchor=north west,cells={nodes={font=\sffamily,anchor=west}},
     draw,inner sep=0.5mm, xshift= 2em]{
      \draw[->, thick](0,0) -- ++ (0.3,0); & \node{\tiny training}; &
      \draw[->, dashed](0,0) -- ++ (0.3,0); & \node{\tiny scoring};\\
     };
    
\end{tikzpicture}
    \caption{Overview of Local Neural Transformations (LNT): neural transformations are jointly trained with the encoder from \gls{cpc} and \gls{ddcl} losses; at test time yielding anomaly scores $\ell_t$ on a sub-sequence level given a series $x_{1:t}$.}
    \label{fig:method_overview}
\end{figure}
\Cref{fig:method_overview} summarizes the main components of \gls{lnt}. Given a (potentially multivariate) time series 
$x_{1:t} \coloneqq (x_1, \ldots, x_t)^T \; ; x_t \in \mathbb{R}^d$, 
our method should output scores $\ell_{t}$ for each individual time step, representing the likelihood that the observation in this time step is an anomaly. 
The inputs are processed by an encoder that is trained jointly with the local neural transformations.

Before presenting local transformation learning and the \gls{ddcl} in \Cref{sec:transformation_learning}, we will first describe the encoder and the \gls{cpc}-loss in \Cref{sec:representation_learning}.
Then, we discuss how a trained model is used to detect anomalies. Finally, in \Cref{sec:theory}, we provide theoretical arguments for combining transformation learning with representation learning.

\subsection{Local Time Series Representations}
\label{sec:representation_learning}
The \gls{lnt} architecture has two components, a feature extractor (encoder) and an anomaly detector (local neural transformations).
The encoder maps a sequence of samples to a sequence of local latent representations $z_t$ and is trained using the principles of \emph{Contrastive Predictive Coding (CPC)} \citep{oord2018representation}. We use the same architecture as \citet{oord2018representation}. The representations produced by the encoder $z_t = g_{\textrm{enc}}(x_t) $ are summarized with an autoregressive module into context vectors $c_t = g_{\textrm{ar}}(z_{\leq t})$.
For all $t$ and all prediction steps $k$, we sample a set $X$ of size $N$ from the training data that contains one positive pair $(x_t,x_{t+k})$ and $N-1$ negative pairs $(x_t,x_j)$, where $x_j$ is randomly sampled from the same mini batch. The \gls{cpc} loss contrasts linear $k$-step future predictions $W_kc_t$ against negative samples:
\begin{equation}
    \label{eq:cpc_loss}
    \mathcal{L}_{\text{CPC}} = -\mathbb{E}_{X \sim \mathcal{D}} \Bigg[ \log \frac{\exp(z_{t+k}^{T}W_kc_t)}{\sum_{X} \exp(z_{j}^{T}W_kc_t)} \Bigg].
\end{equation}
It encourages the context representation $c_t$ to be predictive of nearby local representations $z_{t+k}$. 
Optimizing \Cref{eq:cpc_loss} relates to maximizing the mutual information \citep{tschannen2019mutual} between the context representation $c_t$ and nearby time points $x_{t+k}$ to produce good representations ($z_t$ and $c_t$) that can be used in downstream tasks, including \gls{ad}.

\subsection{Local Neural Transformations}
\label{sec:transformation_learning}
\begin{figure}[t]
    \centering
    \includegraphics[width=\linewidth]{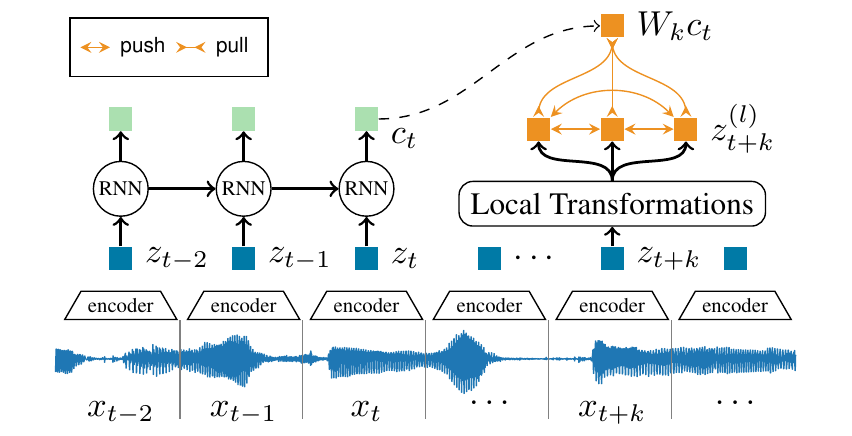}
    \caption{\gls{lnt} on latent representations $z_t$ resulting in transformed views $\mathcal{T}_l(z_t)$ - it can be viewed as pushing and pulling representations in latent space with the Dynamic Deterministic Contrastive Loss (DDCL)}
    \label{fig:ddcl_push_pull}
\end{figure}

The second part of the \gls{lnt} architecture (as shown in \Cref{fig:method_overview}) introduces an auxiliary task for \gls{ad}. The time series representations $z_t$ are processed by local neural transformations to produce different views of each embedding. This operation relates to data augmentation but has two major differences: First, the transformations are not applied at the data level but in the latent space, producing {\em latent views} of each time window. Second, the transformations are not hand-crafted as is often done in computer vision, where rotation, cropping and blurring are popular augmentations, but are instead directly learned during training \citep{tamkin2020viewmaker, qiu2021neural}.

The neural transformations are $L$ neural networks $\mathcal{T}_l(\cdot)$ with parameters $\theta_l$. They are applied to each latent representation $z_t$ to produce different latent views $z_t^{(l)} = \mathcal{T}_l(z_t)$, as shown in \Cref{fig:ddcl_push_pull}.
Each of the transformed views is encouraged to be predictive of the context at different time horizons $k$ by a loss contribution
\begin{equation*}
    \label{eq:partial_ddcl_loss}
    \ell_t^{(k,l)}(x_{\leq t}) = - \log \frac{{h\big(z_{t}^{(l)}, W_k c_{t - k} \big)}}{h\big(z_{t}^{(l)}, W_k c_{t - k} \big) + {\underset{m \neq l}{\sum} h \big( z_{t}^{(l)}, z_{t}^{(m)} \big)}},
\end{equation*}
which simultaneously pushes different views of the same latent representations apart from each other.
The notation $h(z_i, z_j) \coloneqq \exp{\frac{z_i^T z_j}{\|z_i\| \|z_j\|}}$ 
 is defined as the exponentiated cosine similarity in the embedding space.
Unlike most contrastive losses, where the negative samples are drawn from a noise distribution \citep{gutmann2012noise}, the other views to contrast against are constructed deterministically from the same input \citep{qiu2021neural}. 
The loss contributions of each time-step $t$, each transformation $l$, and each time horizon $k$ are combined to produce the \textit{Dynamic Deterministic Contrastive Loss} (\gls{ddcl}): 
\begin{equation}
    \label{eq:ddcl_loss}
    \mathcal{L}_{\text{DDCL}} = \mathbb{E}_{x_{1:T} \sim \mathcal{D}} \Bigg[ \sum_{k=1}^K \sum_{t=1}^{T} \sum_{l =1}^{L}  \ell_t^{(k,l)}(x_{\leq t}) \Bigg].
\end{equation}
During training, the two objectives (\Cref{eq:cpc_loss,eq:ddcl_loss}) are optimized jointly using a unified loss, 
\begin{equation}
    \label{eq:unified_loss}
    \mathcal{L} = \mathcal{L}_{\text{CPC}} + \lambda \cdot \mathcal{L}_{\text{DDCL}}
\end{equation}
and a balancing hyperparameter $\lambda$. 

As depicted by orange arrows in \Cref{fig:ddcl_push_pull}, $\mathcal{L}_{\text{DDCL}}$ can intuitively be interpreted as pushing and pulling different representations in latent space. The numerator pulls the learned transformations $z_{t+k}^{(l)}$ close to $W_k c_t$ ensuring semantic views, while the denominator pushes different views apart, ensuring diversity in the learned transformations.

\subsubsection{Scoring of Anomalies}
\label{sec:anomaly_scoring}
After training \gls{lnt} on a dataset of typical time series, we can use the \gls{ddcl} for \gls{ad}.
Given a test sequence $x_{1:T}$, we evaluate the contribution of individual time steps to $\mathcal{L}_{\text{DDCL}}$ (\Cref{eq:ddcl_loss}). The score for each time point $t$ in the sequence is, 
\begin{align}
\ell_t (x_{\leq t})  = \sum_{k=1}^K \sum_{l=1}^L \ell_{t}^{(k,l)}(x_{\leq t})  
\end{align}
The higher the score, the more likely the series exhibits abnormal behavior at time $t$. 
Unlike \gls{cpc}-based \gls{ad} \citep{de2021contrastive}, this anomaly score has the advantage of being \emph{deterministic} and thus there is no need to draw negative samples from a proposal or noise distribution.

\section{Analysis} 
\label{sec:theory}

Our experiments in \Cref{sec:ablation_study} show that \gls{lnt} empirically outperforms \gls{cpc} on various \gls{ad} tasks. However, since the \gls{lnt} architecture (\Cref{fig:method_overview}) is trained on two losses jointly (the DDCL and CPC losses), the natural question arises: are \emph{both} losses necessary or could we just train on the DDCL loss alone? The following analysis demonstrates the value of considering both losses jointly.


The following theorem shows that, if we trained the \gls{lnt} architecture (i.e. the encoder and transformations $\mathcal{T}_i$) only on the $\mathcal{L}_\text{DDCL}$ loss (without the $\mathcal{L}_\text{CPC}$ loss), the optimal solution would collapse to a constant encoder, a phenomenon known as the \emph{manifold collapse} in deep \gls{ad} \citep{ruff2018deep}. Thus the CPC loss acts as a regularizer in our DDCL framework to avoid the manifold collapse; it is thus strictly necessary.

\begin{theorem}
Let $g_{enc}^\theta$ and $g_{ar}^{\theta}$ be arbitrary encoders (including biases) with learned parameters $\theta$, and let $\mathcal{L}_\text{DDCL}^\theta$ be the corresponding DDCL loss. Then there exist constant encoders $g_{enc}^{\tilde{\theta}}$ and $g_{ar}^{\tilde{\theta}}$ (i.e., $\exists\tilde{\theta},a,b~\forall x,z:g_{enc}^{\tilde{\theta}}(x)=a,g_{ar}^{\tilde{\theta}}(z)=b$) with
$$ \mathcal{L}_\text{DDCL}^{\tilde{\theta}} \leq \mathcal{L}_\text{DDCL}^\theta.$$
\label{c1}
\end{theorem}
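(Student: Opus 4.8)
\textbf{The plan} is to reduce \Cref{c1} to a single scalar minimization problem and then realize its optimum exactly with a constant encoder. First I would use linearity of expectation to write $\mathcal{L}_\text{DDCL}^\theta=\sum_{k=1}^K\sum_{t}\mathbb{E}_{x_{1:T}\sim\mathcal{D}}\bigl[\sum_{l=1}^L\ell_t^{(k,l)}(x_{\le t})\bigr]$ and observe that the inner sum $\sum_{l}\ell_t^{(k,l)}(x_{\le t})$ depends on the model \emph{only} through the $L{+}1$ vectors $z_t^{(1)},\dots,z_t^{(L)}$ and $W_kc_{t-k}$, via the fixed map
\[ \Phi(v_1,\dots,v_L,w):=\sum_{l=1}^L-\log\frac{h(v_l,w)}{h(v_l,w)+\sum_{m\neq l}h(v_l,v_m)}. \]
Since $h$ depends only on the directions of its arguments, $\Phi$ is invariant under positive rescaling of each $v_l$ and of $w$, so it factors through a compact product of $L{+}1$ unit spheres; there the argument of every logarithm stays in a compact subinterval of $(0,1)$, hence $\Phi$ is continuous and attains a minimum $c^\star>0$ at some unit vectors $(v_1^\star,\dots,v_L^\star,w^\star)$.

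\textbf{Next} I would exhibit $\tilde\theta$. Because the encoders may carry biases, there is a parameter setting with $g_{enc}^{\tilde\theta}\equiv a$ and $g_{ar}^{\tilde\theta}\equiv b$ (zero weights, suitable biases), and I pick $b\neq0$. In the same $\tilde\theta$ I set each transformation to the constant map $\mathcal{T}_l\equiv v_l^\star$ and each prediction matrix to $W_k:=w^\star b^{\top}/\|b\|^2$, so that $W_kb=w^\star$. Then for every $x$, $t$ and $k$ we get $z_t^{(l)}=v_l^\star$ and $W_kc_{t-k}=w^\star$, so each $(k,t)$-term of the loss equals $\Phi(v_1^\star,\dots,v_L^\star,w^\star)=c^\star$; summing over the $N$ index pairs $(k,t)$ in \Cref{eq:ddcl_loss} gives $\mathcal{L}_\text{DDCL}^{\tilde\theta}=Nc^\star$.

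\textbf{To finish}, I would lower-bound the left-hand side for the given $\theta$ (restricting to $\theta$ for which $\mathcal{L}_\text{DDCL}^\theta$ is well defined and finite; the degenerate, zero-embedding case makes the claim vacuous): for every $x$, $t$, $k$ the tuple $(z_t^{(1)},\dots,z_t^{(L)},W_kc_{t-k})$ lies in the domain of $\Phi$, so $\sum_{l}\ell_t^{(k,l)}(x_{\le t})$ — which is exactly $\Phi$ evaluated there — is at least $c^\star$. Taking expectations and summing over the same $N$ pairs yields $\mathcal{L}_\text{DDCL}^\theta\ge Nc^\star=\mathcal{L}_\text{DDCL}^{\tilde\theta}$, the assertion of \Cref{c1}.

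\textbf{The main obstacle} I anticipate is the step exhibiting $\tilde\theta$: reproducing the \emph{globally} optimal embedding configuration by a genuinely constant encoder. It hinges on (i) the scale-invariance of $\Phi$, so the minimizer lives on the compact unit spheres and is actually attained rather than only approached; (ii) the freedom to pick the transformations $\mathcal{T}_l$ and the matrices $W_k$ within $\tilde\theta$ — the statement constrains only $g_{enc}$ and $g_{ar}$ — together with (iii) the presence of bias terms, which lets the architecture express constant maps. A minor bookkeeping point is to use identical summation ranges in $\mathcal{L}_\text{DDCL}^{\tilde\theta}$ and $\mathcal{L}_\text{DDCL}^\theta$, which is harmless since every $\ell_t^{(k,l)}\ge0$.
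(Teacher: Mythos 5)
Your overall skeleton matches the paper's: decompose $\mathcal{L}_\text{DDCL}$ into per-$(k,t)$ terms, show a constant encoder can make every term equal to a single value that lower-bounds every term of the given $\theta$. The difference is \emph{which} value you realize, and that is where the gap lies. You take the global minimum $c^\star$ of $\Phi$ over all unit-vector configurations and then must force the model to output exactly $(v_1^\star,\dots,v_L^\star,w^\star)$; this requires re-programming the transformations so that $\mathcal{T}_l(a)=v_l^\star$ (you even state them as constant maps). The theorem's hypotheses grant bias terms only to $g_{enc}$ and $g_{ar}$, not any expressivity to the $\mathcal{T}_l$, and for the architecture actually used in the paper this step genuinely fails: each transformation is a bias-free MLP applied as a sigmoid multiplicative mask, so every view $\mathcal{T}_l(a)=\sigma(\cdot)\odot a$ shares the sign pattern of $a$ and lies in its orthant, whereas the unconstrained sphere-optimum wants the views pushed apart (mutually low cosine similarity) and is therefore unreachable. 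Then your key equality $\mathcal{L}_\text{DDCL}^{\tilde\theta}=Nc^\star$ breaks, and the chain $\mathcal{L}^\theta\ge Nc^\star=\mathcal{L}^{\tilde\theta}$ no longer closes. You flagged this as an "anticipated obstacle," but it is not a bookkeeping point; it is the step the proof stands on.

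The paper avoids this entirely by never optimizing over configurations the model might not express: it picks the index $(x^*,k^*,t^*)$ whose summand $\sum_l\ell_t^{(k,l)}(\cdot;\theta)$ is smallest \emph{for the given $\theta$}, sets the constant encoders to $a=z_{t^*}$ and $b=c_{t^*-k^*}$ (possible because of the biases), keeps the transformations untouched (they now only ever see $a$ and reproduce exactly the original views), and sets all $W_k:=W_{k^*}$ (legitimate since $W$ is folded into $\theta$). Every term then equals $\ell^*$, and the bound follows with no assumption on the $\mathcal{T}_l$. Your argument would be rescued the same way: replace the sphere-optimum by the configuration $\theta$ already achieves at its cheapest $(x,k,t)$. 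To your credit, your compactness argument on the product of spheres handles attainment more carefully than the paper's $\argmin$ in \eqref{eq:opt}, which silently assumes the minimum over a possibly continuous data distribution is attained; but as written, the realizability of $c^\star$ is a genuine gap.
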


\begin{proof}
Let $g_{enc}^\theta$ and $g_{ar}^{\theta}$ be arbitrary encoders (including biases) with learned parameters $\theta$ (for notational simplicity of the proof we understand the additional parameter $W$ as included into $\theta$), and let $\mathcal{L}_\text{DDCL}^\theta$ be the corresponding DDCL loss. We observe from \Cref{eq:ddcl_loss} that $\mathcal{L}_\text{DDCL}^\theta$ decomposes into a sum of loss contributions $\ell_t^{(k, l)}(x_{\leq t};\theta)$. Let
\begin{align}\label{eq:opt}
    (x_{\leq t^*}^*, k^*, t^*) = \argmin \sum_{l=1}^L \ell_t^{(k, l)}(x_{\leq t};\theta),
\end{align}
be the indices of the summands with the smallest contribution to the sum, for a given fixed $\theta$. This means $x^{*}$ is the sample, $k^*$ the time horizon, and $t^*$ the time point associated with the smallest loss contribution to $\mathcal{L}_\text{DDCL}$. Put 
\begin{align}\label{eq:defell}
   \ell^*:=\sum_{l=1}^L \ell_t^{(k^*,l)}(x_{\leq t^*};\theta).
\end{align}
Since our encoders are equipped with bias terms there exist constant encoders $g_{enc}^{\tilde{\theta}}$ and $g_{ar}^{\tilde{\theta}}$ (i.e., $\exists\tilde{\theta},a,b\forall x,z:g_{enc}^{\tilde{\theta}}(x)=a,g_{ar}^{\tilde{\theta}}(z)=b$) with 
\begin{align}\label{eq:concell}
  \forall x,k,t: \sum_{l=1}^L \ell_t^{(k, l)}(x_{\leq t};\tilde{\theta}) = \ell^*.
\end{align}
Then we have:
\begin{align*}
    \mathcal{L}_{\text{DDCL}}(\theta) &\stackrel{\eqref{eq:ddcl_loss}}{=}  \mathbb{E} \Bigg[ \sum_{k=1}^K \sum_t^{T} \sum_{l =1}^{L}  \ell_t^{(k,l)}(x_{\leq t};\theta) \Bigg] \\ 
    &\stackrel{\eqref{eq:opt}}{\geq}  K T \sum_{l=1}^L \ell_t^{(k^*, l)}(x^*_{\leq t^*};\theta) ~\stackrel{\eqref{eq:defell}}{=}~ K T \ell^*  \\ 
    &~\stackrel{\eqref{eq:concell}}{=}~  \mathbb{E} \Bigg[ \sum_{k=1}^K \sum_t^{T} \sum_{l =1}^{L}  \ell_t^{(k,l)}(x_{\leq t};\tilde{\theta}) \Bigg]
    \stackrel{\eqref{eq:ddcl_loss}}{=} \mathcal{L}_{\text{DDCL}}(\tilde{\theta}),
\end{align*}
which was to prove. 
\end{proof}

The above theorem shows that if \gls{lnt} was trained on the \gls{ddcl} loss only, \gls{lnt} would collapse into a trivial solution. 
On the other hand a constant encoder clearly does not optimize the maximum mutual information criterion \citep{oord2018representation}, which is induced by the \gls{cpc} objective.

Besides this hard mathematical evidence, there are also other good reasons to include the CPC loss into \gls{lnt}. For instance, it ensures that the latent representations account for dynamics at longer time scales. This task is carried out by CPC's autoregressive module. Our hypothesis is that, for effective AD within time series, it is necessary to consider both: the local signal in a time window and the larger context across time windows. Otherwise the observations within a time window could be perfectly normal while not making sense in the context of a longer time horizon. For this reason, we believe that there are two types of semantic requirements of the representations and the latent views of \gls{lnt}:
\begin{itemize}
    \item \emph{Local semantics}: views should share relevant semantic information with the current time window.  (Addressed by $\mathcal{L}_\text{CPC}$)
    \item \emph{Contextualized semantics}: views should reflect how the time window relates to the rest of the time series at different, longer time horizons. (Addressed by $\mathcal{L}_\text{DDCL}$)
\end{itemize}
Both loss contributions of \gls{lnt} facilitate these requirements. \Gls{cpc} contributes local latent representations and context representations.
The semantic content of the views is managed by the \gls{ddcl} loss. Especially its numerator ensures contextualized semantics:
the views $z_t^{(l)}$ should be close to different context information $W_k c_{t-k}$ with various lags $k$.
This gives the \gls{lnt} architecture a lever to consider longer time horizons from different (non-transformed) contexts $c_{t-k}$ when deciding whether there is an anomaly at time $t$ meaning that it too exhibits \textit{contextual semantic}.

\section{Experiments}
\label{sec:experiments}
For experimental evaluation of \gls{lnt} in comparison to other methods, we study three challenging datasets. We first describe the datasets, baselines and implementation details. In \Cref{sec:results}, we present our findings: \gls{lnt} outperforms many strong baselines in detecting anomalies in the operation of a water distribution and a water treatment system and accurately finds anomalies in speech. In \Cref{sec:transformation_visualization}, we provide visualizations of the local transformations that are learned by \gls{lnt}. Finally, in \Cref{sec:ablation_study} we analyze the performance of \gls{lnt} in comparison to \gls{cpc} based alternatives. Our findings that \gls{lnt} is consistently superior, complements our theoretical analysis in \Cref{sec:theory} on why \gls{cpc} and transformation learning should be combined.


\subsection{Datasets}
We evaluate LNT on three challenging real-world datasets, namely the \gls{wadi} \cite{ahmed2017wadi}, the \gls{swat} \citep{goh2016dataset} and the \textit{Libri Speech Collection} \citep{panayotov2015librispeech}. The first two datasets are provided with labeled anomalies in the test set. 
As recent observations in \citet{wu2020current} show, many popular datasets for time series \gls{ad} seem to be mislabeled and flawed, which results in the revival of synthetic datasets \cite{lai2021revisiting}. The Libri Speech data is augmented with \textit{realistic synthetic anomalies}.

\paragraph{Water Distribution}
The dataset is acquired from a water distribution testbed and provides a model of a scaled-down version of a large water distribution network in a city \citep{ahmed2017wadi}. 
The time series data is $112$-dimensional with readings from different sensors and actuators such as pumps and valves.
The training data consists of $14$ days of normal operation sampled with a frequency of $1$ Hz, resulting in a series length of $1048571$. The test set consists of $2$ days of additional operation ($172801$ time steps), during which $15$ attacks were staged with an average duration of $\approx 12$ minutes.

\paragraph{Secure Water Treatment}
This dataset is from a testbed for water treatment \citep{mathur2016swat} that evaluates the Cyber Security of a fully functional plant with a six-stage process of filtration and chemical dosing.
\citet{goh2016dataset} collected $11$ days of operation data.
Under normal operation $51$ sensor channels are recorded for $7$ days yielding a training time series of length $475200$. For the test data of length $224960$, $36$ attacks were launched during the last 4 days of the collection process. As suggested in \citet{goh2016dataset,li2019mad}, the first $21600$ samples from the training data are removed for training stability.

We follow the experimental setup of \citet{he2019temporal} and take the first part of the collection under attack as the validation set and drop channels which are constant in both training and test set, yielding a time series of $45$ dimension. 


\paragraph{Libri Speech}
The \emph{LibriSpeech dataset} \cite{panayotov2015librispeech} is an audio collection with spoken language recordings from $251$ distinct speakers.  
We adopt the setup of \citet{oord2018representation} with their train/test split and unsupervised training on the raw time signal without further pre-processing. 
For \gls{ad} benchmarks,  
we randomly place additive pure sine tones of varying frequency ($20$ - $120$ Hz) and length ($512$ - $4096$ time steps) in the test data, yielding consecutive anomaly regions making up $\approx 10\%$ of the test data.
Speech data offers a challenging benchmark for deep \gls{ad} methods 
 since speech typically exhibits complex temporal dynamics, 
due to high multi-modality introduced through different speakers and word sequences \citep{oord2018representation}.

\subsection{Baselines and Implementation Details}
\begin{table}[ht]
    \centering
        \small
    \begin{tabular}{|c||c|c|c|}
        \hline
        Types &  SWaT & WaDi & Libri\\
        \hline
        \# neurons & 24 & 32 & 64 \\
        \# layers & 2 & 2 & 3 \\
        activation & ReLU & ReLU & ReLU \\
        bias & False & False & False \\
        \hline
    \end{tabular}
    \caption{Neural Transformation Hyperparameters}
    \label{tab:hyperparameters}
\end{table}

\paragraph{Baselines}







We study \gls{lnt} in comparison to  different classes of \gls{ad} algorithms, ranging from classical methods to recent advances in deep \gls{ad}. They include
\begin{enumerate*}[label=(\roman*)]
    \item classical methods, such as Isolation Forests \citep{liu2008isolation}, PCA reconstruction error \citep{shyu2003novel}, and Feature Bagging \citep{lazarevic2005feature},
    \item auto-regressive future predictions with LSTM \citep{hundman2018detecting} and GDN \citep{deng2021graph}, which uses a graph to model the relations among variables as attention for the prediction,
    \item methods that estimate the density of the data, such as KNN \citep{angiulli2002fast}, LOF \citep{breunig2000lof}, combinations with deep auto-encoders DAGMM \citep{zong2018deep}, 
    \item methods that employ a one-class objective, including OC-SVM \citep{scholkopf1999support}, DeepSVDD \citep{ruff2018deep} and THOC \citep{shen2020timeseries} for time-series,
    \item methods that leverage the reconstruction of an auto-encoder with EncDec-AD \citep{malhotra2016lstm} and LSTM-VAE \citep{park2018multimodal}
    \item and finally methods that use the ability of GANs to discriminate fake examples, like BeatGAN \citep{zhou2019beatgan} and MAD-GAN \citep{li2019mad}.
\end{enumerate*}

\paragraph{Implementation Details}
For \gls{lnt}, the hyperparamaters are adopted from those reported by \citet{oord2018representation} for CPC: especially $c_t \in \mathbb{R}^{256}$, $z_t \in \mathbb{R}^{512}$ and $K=12$ for experiments with LibriSpeech data. The data is processed in sub-sequences of length $20480$ for both training and testing.
Since the other datasets contain way less diverse data points and show simpler temporal dynamics, the embeddings size, and thus the capacity of the model, is reduced to $c_t \in \mathbb{R}^{32}$, $z_t \in \mathbb{R}^{128}$. Also, the time-convolutional encoder network is down-sized to filters $(3,3,4,2)$ and strides $(3,3,4,2)$ resulting in the convolution of $72$ time steps.

We consistently choose $L=12$ distinct learned transformations $T_l(z_t)$ for all datasets. Each is represented by an \emph{MLP} with properties summarized in table \ref{tab:hyperparameters}. The final layer always shares the dimensionality of $z_t$ and is applied as a \emph{multiplicative mask} with \emph{sigmoid} activation to it.
Additional implementation details are in the appendix.

\begin{table*}[t!]
    \centering
    \small
    \begin{tabular}{c|c|c|c|c|c|c|c|c|c|c|c}
         & \small{LOF} & \small{OCSVM} & \small{IF} & \small{DeepSVDD} & \small{DAGMM} & \small{EncDec} & \small{VAE} & \small{MAD-GAN} & \small{BeatGAN} & \small{THOC} & \small{LNT (ours)}  \\
         \hline
         \hline
         $F_1$ & 86.36 & 75.98 & 85.00 & 82.82 & 85.38 & 75.56 & 86.39 & 86.89 & 81.95 & 88.09 & \textbf{88.65} \\
    \end{tabular}
    \caption{F1-scores ($\%$) for the Secure Water Treatment Dataset (SWaT). Baseline results as reported in \citet{shen2020timeseries}.}
    \label{tab:results:swat}
    \vspace{-5pt}
\end{table*}

\subsection{Results}
\label{sec:results}
We judge the anomaly scores predicted by the algorithms for each time step individually.
Since the ratio of anomalies is imbalanced in the data, we evaluated the prediction performance with the $F_1$ score, consistent with previous work.
Additionally, we also report results using the ROC curve. The area under the curve (ROC-AUC) is a metric to judge the quality of the anomaly score independent of the choice of threshold, which is specifically chosen for its additional insights beyond the evaluation of a single threshold.

\begin{table}[t!]
\small
    \centering
        \begin{tabular}{ |c||c|c|c|c| }
         \hline
          \textbf{Method} & $F_1$ & Prec & Rec \\
         \hline
         \hline
            PCA &  0.10 & 39.53 & 5.63 \\
            KNN &  0.08 & 7.76 & 7.75 \\
            FB &  0.09 & 8.60 & 8.60 \\
            EncDec-AD &  0.34 & 34.35 & 34.35 \\
            DAGMM &  0.36 & 54.44 & 26.99 \\
            LSMT-VAE &  0.25 & 87.79 & 14.45 \\
            MAD-GAN & 0.37 & 41.44 & 33.92 \\
            GDN & \textbf{0.57} & \textbf{97.50} & 40.19 \\
        \hline
            LNT (ours) & 0.39 & 29.34 & \textbf{60.92} \\  
         \hline
        \end{tabular}
        \caption{Experimental Results on the Water Distribution Data (WaDi). Baseline results from \citet{deng2021graph}.}
        \label{tab:results:wadi_final}
        \vspace{-5pt}
\end{table}

\begin{table}[t!]
\small
    \centering
        \begin{tabular}{ |c||c|c|c|c|}
         \hline
          \textbf{Method} & AUC & Prec & Rec & $F_1$  \\
         \hline
         \hline
            LSTM & 0.58 & 15.0 & 15.0 & 0.15\\
            THOC & 0.82 & 30.2 & 30.0 & 0.30\\
         \hline
            LNT (ours) & \textbf{0.93} & \textbf{65.0} &\textbf{65.0} & \textbf{0.65}  \\  
         \hline
        \end{tabular}
        \caption{Experimental Results on synthetic anomalies randomly placed in the LibriSpeech dataset.}
        \label{tab:results:speech}
        \vspace{-5pt}
\end{table}

The results on the \gls{swat} and \gls{wadi} datasets can be seen in \Cref{tab:results:swat,tab:results:wadi_final}, respectively. The ROC curves of our method on the \gls{swat} and \gls{wadi} datasets are provided in \Cref{fig:cpc_scoring_swat,fig:cpc_scoring_wadi}. For \gls{swat},
our approach (LNT) outperformed a set of challenging baselines as reported by \citet{shen2020timeseries} with the highest $F_1$ score ($88.65\%$).
Meanwhile for \gls{wadi}, our model produces comparable results both in terms of $F_1$ and precision, with the highest recall value\footnote{In all experiments (and methods) the thresholds on the continuous anomaly score are optimized for the best $F1$.}.
Notably, GDN achieves the highest precision on \gls{wadi} \footnote{Results reported for GDN seem to be hard to reproduce (see \url{https://github.com/d-ailin/GDN/issues/9}).}, but has a lower recall than our method. In many mission-critical applications, detecting as many anomalies as possible is often much more important, as a false negative can do more harm than a false positive. This makes the high recall of \gls{lnt} ($60.92\%$) preferable, while retaining an acceptably high $F1$ score.

We argue that the novel criterion for AD based on contrasting learned latent data transformations allows \gls{lnt} to also uncover some of the harder detectable anomalies in the dataset.
A similar behaviour can also be observed for the LibriSpeech data with results in terms of ROC curves shown in \Cref{fig:roc_results}. Here, \gls{lnt} clearly outperforms both deep learning methods. This shows that detecting anomalies within speech data with its complex temporal dynamics is indeed a challenging task for many deep AD algorithms. Especially the future predictions of LSTM perform only slightly better than random chance in this experiment for all possible thresholds. This emphasizes the benefit of contrasting of neural transformations to uncover such hard anomalies. Additional metrics for this experiment are reported in \Cref{tab:results:speech}. 

\begin{figure}[t!]
    \centering
    \includegraphics[width=0.75\linewidth]{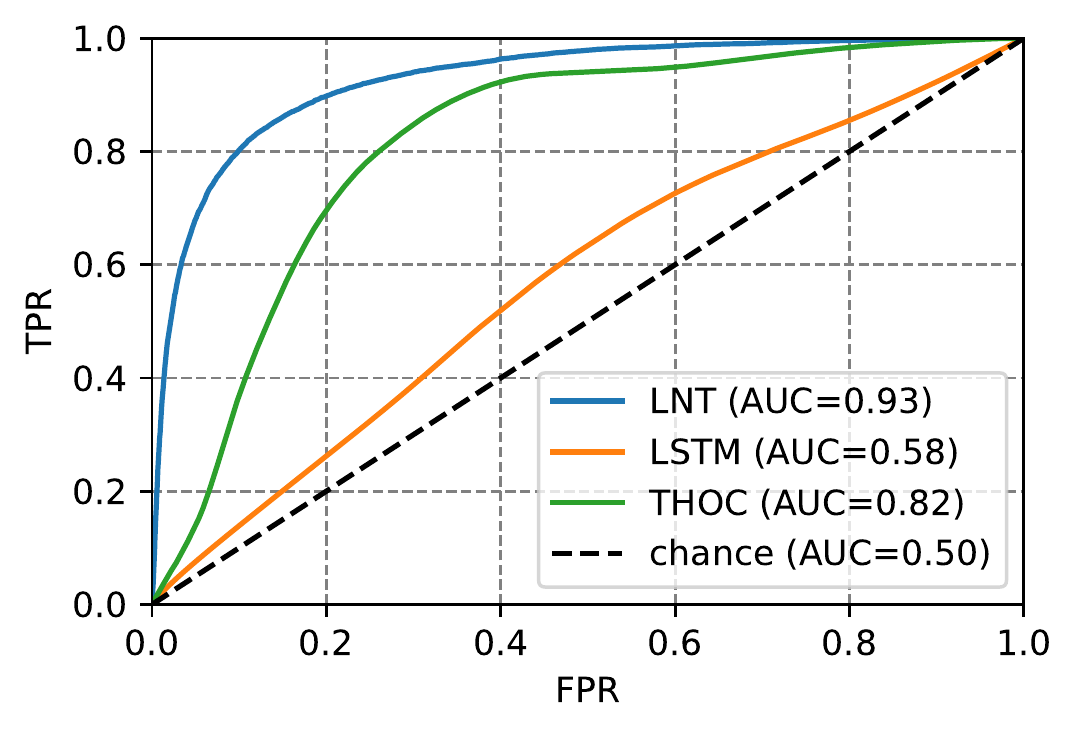}
    \caption{Our approach \gls{lnt} outperforms deep baselines in \gls{ad} on speech data in terms of ROC-AUC curves.}
    \label{fig:roc_results}
    \vspace{-10pt}
\end{figure}

\newcommand{\mcrot}[4]{\multicolumn{#1}{#2}{\rlap{\rotatebox{#3}{#4}~}}}

\begin{figure*}[t!]
    \centering
    \includegraphics[width=0.9\textwidth]{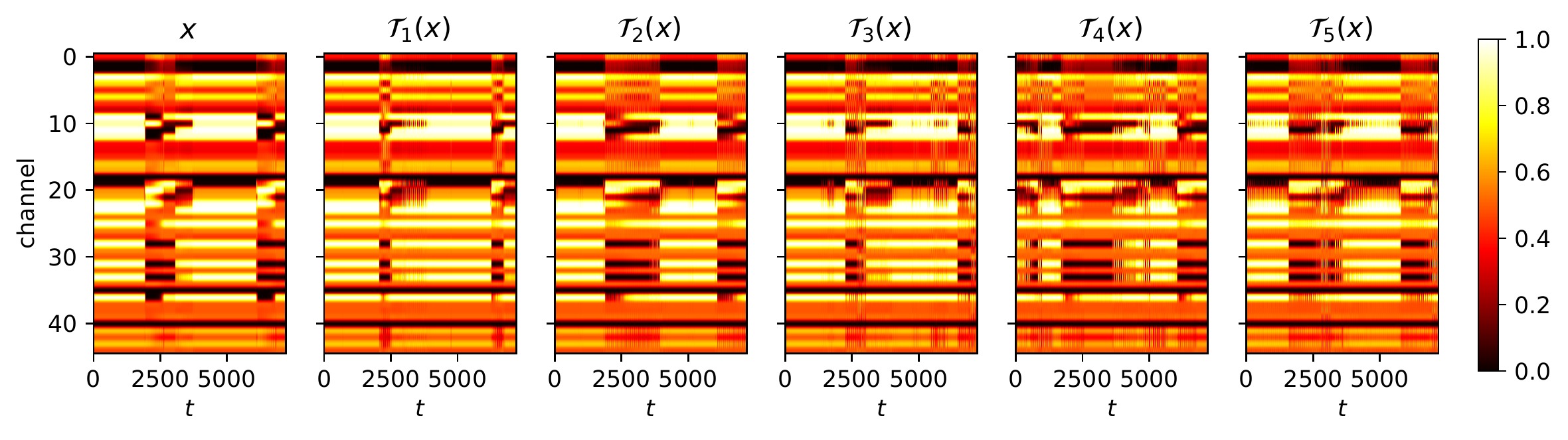}
    \caption{Visualizations of selected transformations in data-space that show semantically interpretable behaviour, such as altered delays in specific channels. Representations from \gls{swat} dataset are decoded with a seperatly trained auto-encoder.}
    \label{fig:transformation_visualization}
    \vspace{-5pt}
\end{figure*}

\subsection{Visualization of Transformations}
\label{sec:transformation_visualization}
\begin{figure*}[ht]
     \centering
     \begin{subfigure}[b]{0.28\textwidth}
         \centering
         \includegraphics[width=\linewidth]{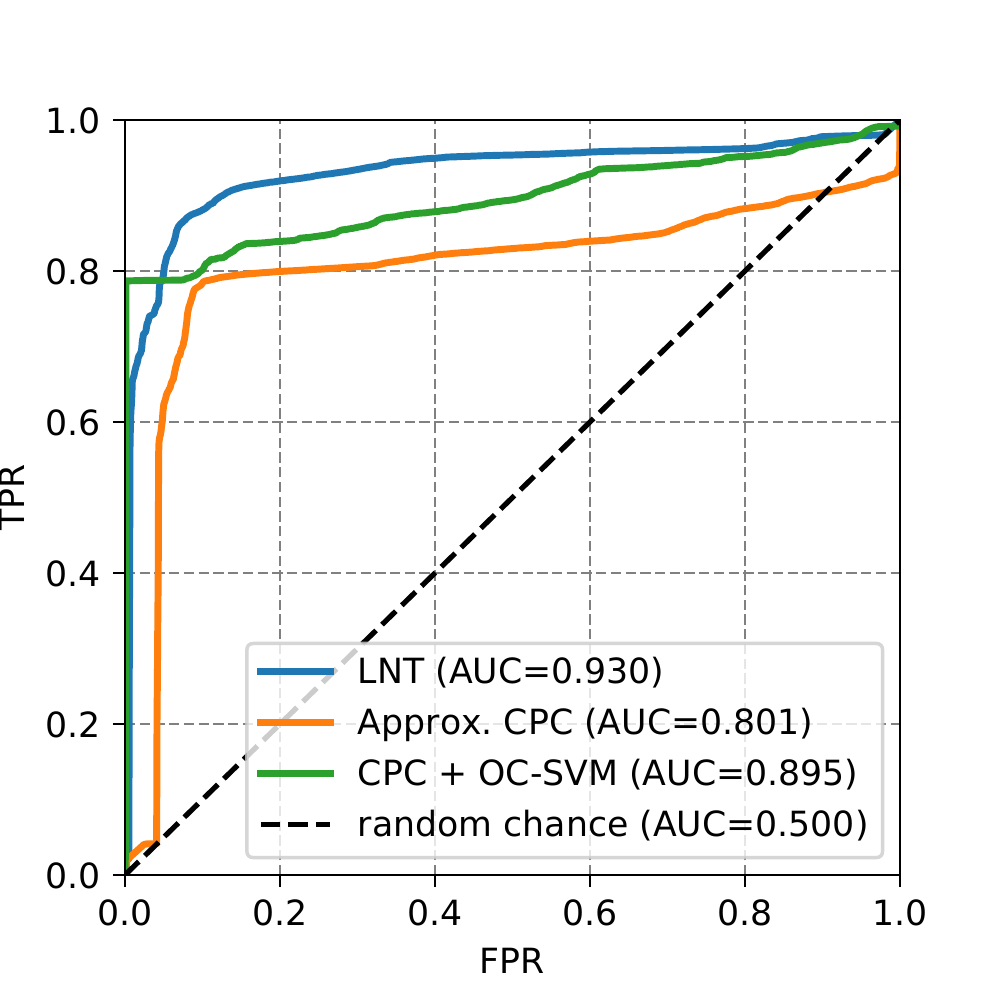}
         \caption{SWaT}
         \label{fig:cpc_scoring_swat}
     \end{subfigure}
     \begin{subfigure}[b]{0.28\textwidth}
         \centering
         \includegraphics[width=\linewidth]{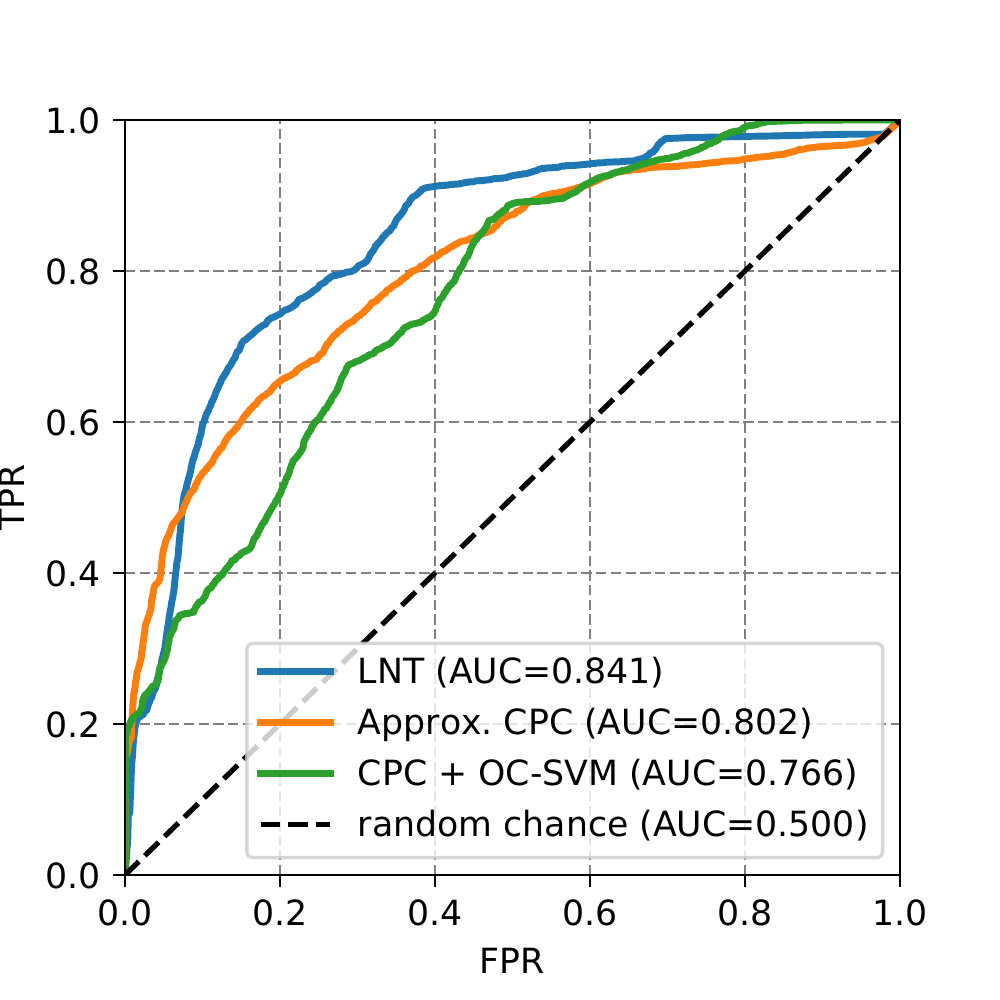}
         \caption{WaDi}
         \label{fig:cpc_scoring_wadi}
     \end{subfigure}
     \begin{subfigure}[b]{0.28\textwidth}
         \centering
         \includegraphics[width=\linewidth]{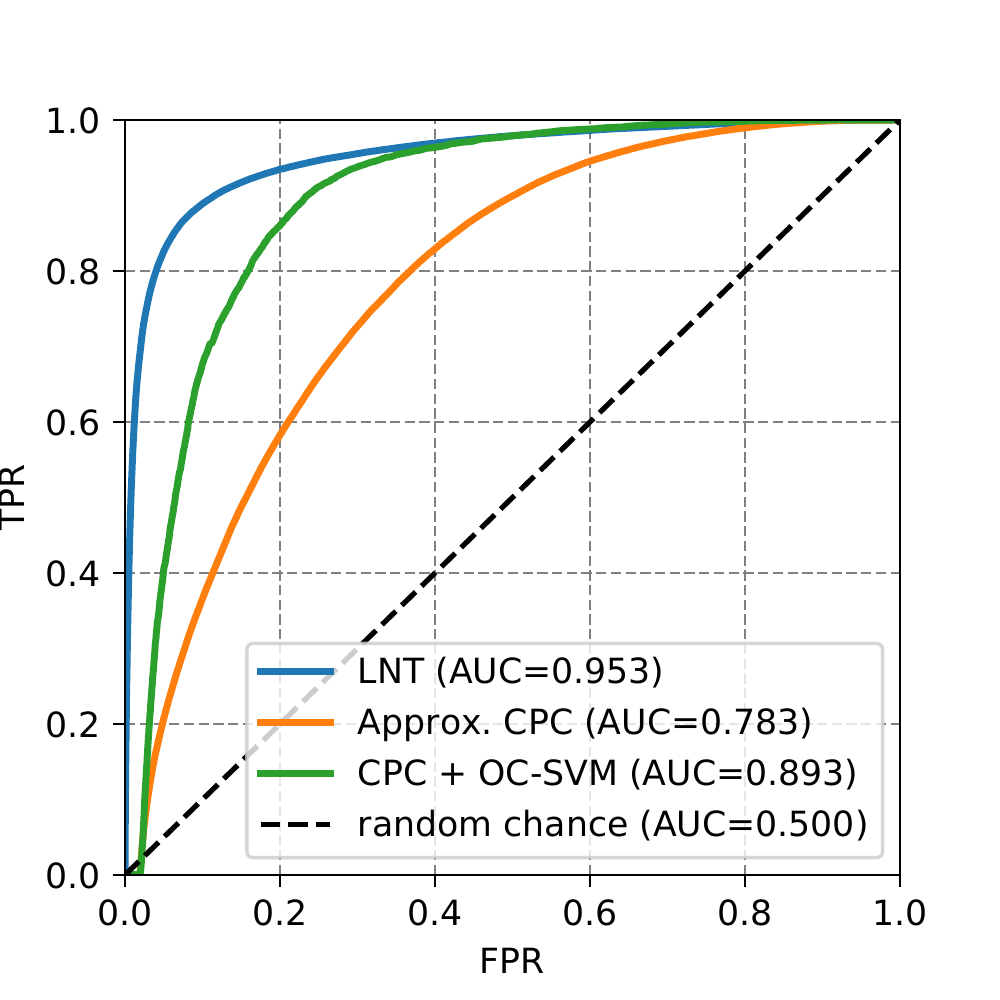}
         \caption{LibriSpeech}
         \label{fig:cpc_scoring_libri}
     \end{subfigure}
    \caption{Improvement of LNT over CPC scoring evaluated for different datasets. The combination of transformation learning with local representation learning of CPC consitently outperforms the other variants of CPC for anomaly scoring.}
    \label{fig:cpc_scoring}
    \vspace{-5pt}
\end{figure*}
In general, it is considered hard to get insights from embedding visualizations for $z_t$ in the latent space. Hence, to make the transformations interpretable in terms of semantics, we propose to visualize them in data space. We reuse the encoder as described in  \Cref{sec:representation_learning} and enrich it with a separate decoder. We train the decoder to reconstruct the (non-transformed) input data while freezing the encoder weights. The trained decoder is then applied to transformed embeddings to visualize them in data space.

We chose a subset $\{ \mathcal{T}_i \}_{i=1}^5$ of five transformations which showed interpretable behavior in experiments with \textit{\gls{swat}} as shown in \Cref{fig:transformation_visualization}: For the non-transformed series $x$ the signal jumps in channels $25$ and $36$ at $t \approx 2500$. This jump is delayed for channels $26-35$. Interestingly, we found that this delay is altered by the learned transformations. For example, $\mathcal{T}_1$ removes this delay causing the signal jump for all aforementioned channels at $t \approx 2500$. In contrast, $\mathcal{T}_2$ affects the series oppositely by enlarging this delay. 

In summary, these transformations produce \textit{semantically} meaningful and \textit{diverse} views of the time series.
Admittedly, current interpretations are still rather high-level and fairly limited from application standpoints. However, without domain knowledge, there exists no \textit{gold standard} for a good transformation on the data to compare against. This was the original motivation for the usage of \textit{learnable transformations}, as effective data augmentation for \gls{ad}.

\subsection{Comparison of \gls{cpc} based \gls{ad} Methods}
\label{sec:ablation_study}
Finally, we study the advantage of \gls{lnt} over \gls{cpc}. There are various ways to use \gls{cpc} for \gls{ad}. Beyond \gls{lnt}, we consider two methods that build on \gls{cpc}: 
\begin{enumerate*}[label=(\roman*)]
    \item methods that directly use the \gls{cpc}-loss to score anomalies \citep{de2021contrastive} and
    \item methods that use \gls{cpc} as a feature extractor and then run another \gls{ad} method such as \acrshort{ocsvm} on the extracted features. 
\end{enumerate*}
One disadvantage of (i) are the negative samples. They make it nontrivial to evaluate the \gls{cpc}-loss on test data. We employ a practical implementation (Approx. CPC) without negative samples at test time. \citet{de2021contrastive} argue that taking samples from the test data is biased and using the training data is infeasible in practice.



In contrast, \gls{ddcl} is deterministic and the alternative views are all constructed from a single sample. It is hence straightforward to use it to score anomalies at test time.
From the results in \Cref{fig:cpc_scoring}, we found that the combination of transformation learning with local representation learning of \gls{cpc} consistently outperforms the considered variants of \gls{cpc} for \gls{ad} in all three datasets.
This connects to the discussion about \textit{contextualized semantics} in \Cref{sec:theory}. Comparing \gls{lnt} with CPC + OC-SVM supports our claim:  While the OC-SVM with \gls{cpc} input features has access only to  the \textit{local semantics} in the \gls{cpc} representations, the performance of \gls{lnt} in \Cref{fig:cpc_scoring} is consistently superior and can be explained by its transformations exhibiting both \textit{contextualized semantics} and \textit{diversity}.

\section{Conclusion}
We propose a novel self-supervised method, \gls{lnt}, to detect anomalies within time series. The key ingredient is a novel training objective combining representation and transformation learning. 
We prove that both  learning paradigms complement each other to avoid trivial solutions not appropriate for \gls{ad}. We find in an empirical study that \gls{lnt} learns to insert delays, which allows it to outperform many strong baselines on challenging detection tasks.

\section*{Acknowledgements}
Marius Kloft acknowledges support by the Carl-Zeiss Foundation, the DFG awards KL 2698/2-1 and KL 2698/5-1, and the BMBF awards 01$|$S18051A, 03$|$B0770E, and 01$|$S21010C.

Work by Decky Aspandi Latif and Steffen Staab has been partially funded by BMBF in the research project ``XAPS - eXplainable AI for Automated Production System''.

Stephan Mandt acknowledges support by DARPA under contract No. HR001120C0021, the Department of Energy under grant DE-SC0022331, the National Science Foundation under the NSF CAREER award 2047418 and Grants 1928718, 2003237 and 2007719, as well as gifts from Intel, Disney, and Qualcomm. Any opinions, findings and conclusions or recommendations expressed in this material are those of the authors and do not necessarily reflect the views of DARPA or NSF.

The Bosch Group is carbon neutral. Administration, manufacturing and research activities do no longer leave a carbon footprint. This also includes GPU clusters on which the experiments have been performed.

\bibliography{refs}
\bibliographystyle{icml2021}

\clearpage
\appendix
\section{Further Implementation Details}
In this section the implementation details for the experiments conducted in the main paper are further elaborated.
These include our method (\gls{lnt}) as well as all baselines that we implemented for comparision.

\subsection{Hardware}
All experiments were run on virtualized hardware with 8 CPU cores of type \textit{Intel(R) Xeon(R) Gold 6150} running at $2.70$ GHz, $32$ GB RAM, and a single \textit{TeslaV100-SXM2} with $32$ GB of gpu memory. Consistently we use \textit{Python 3.9}, \textit{PyTorch} in version $1.8.1$ with \textit{CUDA} in version $11.1$ and \textit{cuDNN} in version $8.0.5$.

\subsection{Hyperparameters}

\paragraph{LNT}
The hyper-parameters for our method were determined by the following procedure.
Starting with the hyper-paramters as reported in \citet{oord2018representation}, the sizes of the embeddings $z_t$ and $c_t$, which also determines the number of memory units in the recurrent part $g_\text{ar}$, and the number of parameters in the convolutional encoder $g_\text{enc}$ are downsized to fit the complexity and amount of data in the other datasets. To find a well generalizing setup, a hold-out validation set (split from the training data) was used. For Libri-Speech we considered the hyper-parameters as optimal and didn't change them. As a rule of thump, the sequence length for training and the width of the strided temporal convolutions were always chosen in a way such that the number of recurrent steps $g_\text{ar}$ takes matches with the setup ($=128$) in \citet{oord2018representation}.

\gls{lnt} is trained for $100$ epochs, respectively $500$ epochs on \gls{swat} and \gls{wadi}, with learning rate $2 \cdot 10^{-4}$, batch size $32$ and $\lambda = 10^{-3}$.

\subsection{Baselines in LibriSpeech Experiments}

The following hyperparameter setups are used for the experiments conducted with synthetic anomalies in LibriSpeech data.

\paragraph{LSTM}
Here, a standard \gls{lstm} network with $2$ layers and $256$ hidden units each was chosen. With this setup the number of hidden units aligns with the \gls{lnt} setup and the multiple layers should account for the missing encoder structure in \gls{lstm}. It is trained until convergence, which took approximately $100$ epochs, with batch size $32$, learning rate $2\cdot10^{-4}$ and a dropout of $0.3$.

\paragraph{THOC}
Here, the Implementation was kindly provided by the authors. We used a smaller sub-sequence length of $1024$ for training due to the high memory load of the model. Predictions at test time are stitched together to align with the longer sequence length. The method is trained to fit $3$ layers hierarchical with dilations $(1,2,4)$, $128$ hidden units and $6$ clusters in each layer. The method is trained with learning rate $10^{-3}$ and batch size $32$ and converged after $50$ epochs. 

\section{Design Choices for Model Evaluation}
The following notes should briefly justify the design choices made for the empirical evaluation of the model in the main paper: 

\begin{itemize}
    \item \textbf{Question:} Why is there a different set of methods for each of the dataset? \\
    \textbf{Answer:} The different baseline sets stem from the different lines of work with reported results \citep{shen2020timeseries, li2019mad}. Especially, \citet{shen2020timeseries} is chosen for its variety of different baselines and its clear experimental setup provided with their code. Unfortunately only \citet{deng2021graph} evaluates on the WaDi dataset with the experimental details provided in \citet{li2019mad}.
    
    \item \textbf{Question:} Why are there different evaluation metrics? \\
    \textbf{Answer:} We choose the Receiver Operating Characteristic (ROC) for its additional insights it provides, beyond the performance of a single threshold. In anomaly detection setups simply optimizing for the best $F_1$ score might not be sufficient. Since a low recall could mean that some anomalies, intrusions or attacks are missed - one often strives for a good $F_1$ above a given recall rate. Since this can be highly application dependent, a good anomaly detector should output a well calibrated anomaly scores that works for different choices of threshold. This is measured by ROC. Since previous work reports $F_1$ scores only, we adopted this for tabular results.
    
    
    \item \textbf{Question:} How come GDN has such a high precision on WaDi? \\
    \textbf{Answer:} \citet{deng2021graph} have shown that graph that model relationship among sensors are a strong representation of the local behaviour in a time series (outperforming the dense vector representations in \gls{lnt} that originate from \gls{cpc} as a time series representation learner.) This yields a very high precision in the anomaly detection task by predicting the next time step give the graph about short term history. But \gls{gdn} learns these graphs locally on a sliding window and thus no broader context (slow features) is included. This might explain the lower recall of the method, if some context-dependent anomalies are missed. We hypothesize that more context-depended outliers can be detected, yielding a higher recall, if local transformations that account for contextualized semantics are applied to the learned graphs, outputting latent graph views that can be contrast. Future work may may especially consider such an approach, unifying the best of both approaches.
    
    Additionally the results reported by \citet{deng2021graph} seem to be hard to reproduce (please see the discussion linked in the main text for details).
    
    
\end{itemize}

\end{document}